\documentclass[10pt, a4paper, copyright, logo]{instadeep}

\newcommand{\lmax}{L_{\max}}
\newcommand{\logsig}{\sigma_{\!\log}}

\newcommand{\Gtil}{\widetilde{G}}

\title{Integral Formulas for Vector Signal Tensor Products}

\correspondingauthor{v.heyraud@instadeep.com}

\author[1]{Valentin Heyraud}
\author[1]{Zachary Weller-Davies}
\author[1]{\\Jules Tilly}

\affil[1]{InstaDeep, 1 Triton Square, London, NW1 3BF, United Kingdom}

\begin{abstract}
We derive integral formulas that simplify the Vector Signal Tensor Product recently introduced by~\citet{xie2026asymptoticallyfastclebschgordantensor}, which generalizes the Gaunt Tensor Product to antisymmetric couplings. In particular, we obtain explicit closed-form expressions for the antisymmetric analogues of the Gaunt coefficients. This enables us to simulate the Clebsch-Gordan tensor product using a single Vector Signal Tensor Product, yielding up to a $9\times$ reduction in the required tensor product evaluations. Our results enable efficient and practical implementations of the Vector Signal Tensor Product, paving the way for applications of this generalization of Gaunt tensor products in $\mathrm{SO}(3)$-equivariant neural networks. Moreover, we discuss how the Gaunt and the Vector Signal Tensor Products allow to control the expressivity-runtime tradeoff associated with the usual Clebsch-Gordan Tensor Products. Finally, we investigate low rank decompositions of the normalizations of the considered tensor products in view of their use in equivariant neural networks.
\end{abstract}

\begin{document}
\maketitle

\section{Introduction}
Clebsch-Gordan Tensor Products (CGTP) are the most prominent workhorses of $\mathrm{SO}(3)$-equivariant neural networks, which have proven effective for a wide range of applications involving geometric data. In such models, feature vectors transform according to irreducible representations of $\mathrm{SO}(3)$, and nonlinear interactions between these must preserve this transformation law. The CGTP provides a standard combination mechanism that respects rotational symmetry. In practice, this operation decomposes the interaction between two features into several admissible coupling paths, each corresponding to a different way in which their representations can combine.

Unfortunately, CGTP operations are costly; they naively admit a $\mathcal{O}(L^{6})$ scaling with the representation order $L$ of the features on which they act. To address this limitation, \citet{luo2024enabling} proposed using Gaunt Tensor Products (GTP) to accelerate tensor products involving feature vectors with multiple irrep types. Their approach relies on an integral formula for the Clebsch-Gordan coefficients which allows to rewrite the tensor product between feature vectors in terms of triple integrals of the corresponding signals on the sphere.

Subsequently, \citet{xie2025the} investigated the computational and representational properties of GTPs. They compared GTP to alternative tensor product constructions and proposed two efficient implementations, the first one evaluating the required triple integrals using a spherical design~\cite{hardin1996mclaren}, and the second, called S2FFT, relying on a spherical design and a $\mathbb{S}^2$ fast Fourier transforms algorithm. As noted by \citet{xie2025the}, however, a key limitation of the GTP is that it cannot reproduce the antisymmetric components of the CGTP, preventing key underlying tensor product operations such as the cross product. Preserving all coupling paths is important for the expressive power of equivariant neural networks, as these paths capture the different symmetry-preserving ways in which features can interact. 

To circumvent this issue,~\citet{xie2026asymptoticallyfastclebschgordantensor} recently introduced Vector Signal Tensor Products  (VSTP). The VSTP generalizes GTP by constructing vector-valued signals on the sphere. The VSTP exhibits an advantageous complexity scaling in the features irreps order over the CGTP and can encompass both the symmetric and anti-symmetric cases. Specifically, a main result of \cite{xie2026asymptoticallyfastclebschgordantensor} is that a collection of VSTPs can be used to simulate CGTPs, meaning that it is capable of producing non-zero outputs for all admissible coupling paths. 

However, the construction proposed in~\cite{xie2026asymptoticallyfastclebschgordantensor} yield a cumbersome implementation. While the authors show that the VSTP can be used to recover CGTP up to a factor, each irrep label admits up to three admissible internal angular-momentum couplings. Consequently, simulating the CGTP requires computing a VSTP for every pair of couplings, leading to as many as $3\times3=9$ VSTP operations in practice.

In this paper, we build on the results of~\cite{xie2026asymptoticallyfastclebschgordantensor} and derive simple closed-form integral expressions that serve as antisymmetric analogues of the integral GTP formula \cite{luo2024enabling}. Our main result, given in Equation~\eqref{eq: main_result_odd}, shows that the antisymmetric components of the CGTP can be encoded as an integral over the sphere, and we obtain explicit closed-form expressions for the corresponding antisymmetric Gaunt coefficients in Appendix~\ref{sec: appendix}. Moreover, via Equation \eqref{eq: combined_tp}, we show that the symmetric and antisymmetric contributions of the GTP can be combined into a single integral representation, yielding a simple and practical recipe to simulate a CGTP with only one VSTP, while retaining the favorable scaling properties of the VSTP framework. We also discuss the advantages and limitations of such integral-based tensor products, identifying the regimes in which they can effectively replace standard CGTP.

\section{Background}

In this section, we briefly summarize the findings from the references \cite{luo2024enabling, xie2025the, xie2026asymptoticallyfastclebschgordantensor} on which we base our results. In the following we assume a basic knowledge of $\mathrm{SO}(3)$-equivariant neural networks and of the representation theory of $\mathrm{SO}(3)$, for which we refer the to Refs.~\cite{unke2024e3x, geiger2022e3nneuclideanneuralnetworks} and references therein.

We work with the basis of real spherical harmonics, as it is the basis commonly used in python libraries for $\mathrm{SO}(3)$-equivariant neural networks \cite{geiger2022e3nneuclideanneuralnetworks, unke2024e3x}. We denote $Y_{lm}$ the real spherical harmonic polynomials and $C^{l_3m_3}_{l_1m_1, l_2m_2}$ the Clebsch-Gordan coefficients in this basis. 

\subsection{Clebsch-Gordan Tensor Products}

The CGTP allows to couple equivariant features of irreps type $(l_1, l_2)$ to produce new equivariant of irrep type $l_3$ provided the triplet $(l_1,l_2,l_3)$ satisfy the triangular condition
\bea
|l_1-l_2|\leq l_3 \leq l_1+ l_2\,.
\eea
For feature vectors $h^{l_1}, h^{l_2}$ it is defined using the Clebsch-Gordan coefficients as
\begin{equation}
(h^{l_1}\otimes h^{l_2})^{l_3m_3} := \sum_{m_1, m_2} C^{l_3 m_3}_{l_1m_1, l_2m_2} h^{l_1m_1}h^{l_2m_2}\,.
\end{equation}
The Clebsch-Gordan coefficients satisfy the symmetry relation~\cite{Varshalovich1988}
\bea
\label{eq:cg-coef-sym}
C^{l_3m_3}_{l_1m_1, l_2m_2} = (-1)^{l_1+l_2-l_3}C^{l_3m_3}_{l_2m_2, l_1m_1}\,,
\eea
from which it appears that the CGTP is anti-symmetric whenever the triplet $l_1 + l_2 - l_3$ is odd and symmetric whenever it is even.

\subsection{Gaunt Tensor Products}

\citet{luo2024enabling} proposed GTPs as fast alternative to CGTP. We discuss the regime in which GTPs can be beneficial in Section~\ref{sec: expressivity}. GTPs rely on the following formula relating Gaunt coefficients, defined as a triple integral of spherical harmonic polynomials, to the Clebsch-Gordan coefficients
\bea
\label{eq: gcoef-integral}
G^{l_3 m_3}_{l_1 m_1, l_2 m_2} :&= \int_{\sphere} Y_{l_1m_1}(\hat{r})Y_{l_2m_2}(\hat{r})Y_{l_3m_3}(\hat{r}) \mathrm{d}\mu_{\sphere}(\hat{r})\\
&= \tilde{G}^{l_3}_{l_1, l_2} C^{l_3 m_3}_{l_1 m_1, l_2 m_2}\,.
\eea

Here $\hat{r}$ is the unit vector of spherical coordinates ${(\theta, \phi)\in[0,\pi]\times[0,2\pi)}$ and ${\mathrm{d}\mu_{\mathbb{S}^2}(\hat{r}) = \sin(\theta)\mathrm{d}\theta\mathrm{d}\phi}$ is the uniform measure on the sphere $\sphere$. To simplify notations, in the following we sometimes omit the $\hat{r}$ argument in integrals.

GTPs are defined by replacing the Clebsch-Gordan coefficients by the Gaunt coefficients
\begin{equation}
(h^{l_1}\otimes_{\mathrm{Gaunt}} h^{l_2})^{l_3m_3} = \sum_{m_1, m_2} G^{l_3 m_3}_{l_1m_1, l_2m_2} h^{l_1m_1}h^{l_2m_2}\,,
\end{equation}
such that
\bea
(h^{l_1} \otimes_{\mathrm{Gaunt}} h^{l_2})^{l_3m_3} &= \tilde{G}^{l_3}_{l_1,l_2}(h^{l_1} \otimes h^{l_2})^{l_3 m_3}\,.
\eea

One can rely on the integral formula in Equation \ref{eq: gcoef-integral} to re-write the GTP as
\bea
\left(h^{l_1} \otimes_{\mathrm{Gaunt}} h^{l_2}\right)^{l_3m_3}= \sum_{m_1, m_2}h^{l_1m_1}h^{l_2m_2}\int_{\mathbb{S}^{2}} Y_{l_1m_1}Y_{l_2m_2}Y_{l_3m_3} \mathrm{d}\mu_{\mathbb{S}^2}.
\eea
This expression can be further factorised by introducing the signal on the sphere associated with the features $h^{l_1}, h^{l_2}$. In fact, an equivariant feature $h^{l}$ yields a signal of the sphere of the form
\bea
F_{h^{l}}:\mathbb{S}^{2}&\to\mathbb{R}\\
\hat{r}&\mapsto \langle h^{l}, Y_{l}(\hat{r})\rangle\,
\eea
where $\langle h^{l}, Y_{l}(\hat{r})\rangle := \sum_{m=-l}^{m} h^{lm}Y_{lm}(\hat{r})$. In this notation, the GTP can be rewritten more compactly as 
\bea \label{eq: gtp-vectorvaluedintegral}
(h^{l_1} \otimes_{\mathrm{Gaunt}}h^{l_2}) ^{l_3m_3}=\int_{\mathbb{S}^{2}} F_{h^{l_1}}F_{h^{l_2}} Y_{l_3m_3}\mathrm{d}\mu_{\mathbb{S}^2}\,.
\eea
This integral form of the GTP allows to factorize sums of tensor products corresponding to different irreps paths $(l_1, l_2)\to l_3$, which can lead to a significant speed-up. We discuss the benefits and limitations of this approach in Section~\ref{sec: expressivity}.

\citet{luo2024enabling} originally proposed to use a method based on Fast Fourier Transform to estimate the triple integral defining the GTP. Instead, \citet{xie2025the} proposed to use a spherical $t$-cubature~\cite{mclaren1963optimal, hardin1996mclaren, Womersley2018,PRICE2024113109}, namely a finite set of points $\left\{\hat{r}_i\right\}_{1\leq i\leq N}\in\mathbb{S}^{2}$ on the sphere with associated weights ${\left\{w_i\right\}_{1\leq i\leq N}\in\mathbb{R}}$ such that for any polynomial $P:\mathbb{S}^{2}\mapsto \mathbb{C}$ of degree $t$,
\bea
\sum_{i=1}^{N} w_i P(\hat{r}_i) = \int_{\mathbb{S}^{2}} P(\hat{r})\mathrm{d}\mu_{\mathbb{S}^2}(\hat{r})\,.
\eea
Remarking that the spherical harmonics $Y_{lm}$ are uniform polynomials of degree $l$, it suffices to use a spherical $(l_1+l_2+l_3)$-cubature to evaluate the triple integral. Given that a spherical $L$-cubature contains roughly $\mathcal{O}(L^2)$ points~\cite{mclaren1963optimal}, \citet{xie2025the} estimate the runtime of the GTP with this evaluation method to be $\mathcal{O}(L^2\log(L))$ for a single triplet of irreps $l_1=l_2=l_3=L$. The authors also propose an evaluation method based on the fast spherical transform algorithm of \citet{healy2003ffts} with a similar runtime for a single triplet but yielding an improved asymptotic runtime in the case of multiple input and output irreps which we discuss in Section~\ref{sec: expressivity}.

From the definition of the Gaunt coefficients and the symmetry property of the Clebsch-Gordan coefficient of Eq.~\eqref{eq:cg-coef-sym}, one clearly sees that $\tilde{G}^{l_3}_{l_1, l_2} =0$ whenever $l_1+l_2-l_3$ is odd. Consequently, the GTP fails to reproduce the CGTP for anti-symmetric cases. 

\subsection{Vector Signal Tensor Products}

\citet{xie2026asymptoticallyfastclebschgordantensor} recently proposed a generalization of the GTP approach encompassing both the symmetric and the anti-symmetric cases of the CGTP. They suggest to replace the signals on the sphere used in GTP by irreducible tensor-valued signals. To this end they propose to use the so-called Tensor Spherical Harmonics (TSH)
\bea
Y^{l s}_{j m_j}:\mathbb{S}^2\to \mathbb{R}^{2s+1}
\eea
which entries are defined as
\bea
\left(Y^{l s}_{j m_j}\right)_{m_s} := \sum_{m_l} C^{j, m_j}_{l, m_l, s, m_s} Y_{lm_l}\,.
\eea
TSH are indexed by the integers $(j,l,s,m)$ where the Clebsch-Gordan selection rules impose $|j-s|\leq l\leq j+s$ and $-j\leq m_j\leq j$. These tensors are used in physics, where they correspond to a coupling of a spin angular momentum $s$ with an orbital momentum $l$ to obtain a total angular momentum $j$ \cite{Varshalovich1988}. When $s=1$, TSH are refered to as Vector Spherical Harmonics (VSH). Note that for a given value of $j$ there are only three VSH corresponding to the only values of $l$ allowed by the selection rule, namely $j-1, j$ and $j+1$. VSH can be decomposed in terms involving the gradients of the associated spherical harmonics, which we leverage to demonstrate our main results presented in Section~\ref{sec: main-results}, see Appendix~\ref{sec: appendix}.

The authors suggest to replace the usual equivariant feature $h^{l}$ by tensor features $h^{(j,l)}$, which contraction with THS yields irreps-valued signal on the sphere. They then define a generalization of GTP for these irreps-valued tensor, which they call Irrep Signal Tensor Product (ISTP). The authors then derive a formula to expand the tensor product of TSH generalizing existing formula for $s=1$. This allows them to obtain a generalization of the integral formula of GTP to ISTP, where point-wise product of the signals are replaced by tensor products of the corresponding irrep-valued signals. They further show that to obtain all possible tensor products interactions, it suffices to consider ISTP restricted to VSH, which they call Vector Signal Tensor Product (VSTP). Crucially, they prove that VSTP are enough to simulate all the CGTP corresponding to different values of $j$ for the inputs and outputs. In particular, they show that for a CGTP corresponding to an irrep path $(j_1, j_2)\to j_3$, at least one of the $3\times3=9$ interactions between the input's VSH yields non-zero output of type $j_3$.

The VSTP introduced in Ref.\cite{xie2026asymptoticallyfastclebschgordantensor} inherits the same efficient asymptotic runtime scaling in $L$ as the GTP and it encompasses both the symmetric and anti-symmetric cases of the CGTP. However, practical applications of VSTPs are hindered by their cumbersome implementation. In fact, VSTPs replace standard spherical-harmonics-based features with VSH-derived equivariant features, and they necessitate computing nine cross products between the irreps-valued signals on the sphere associated with each input. This large number of operations might also harm the potential speed-up of VSTP in the low $L$ regime, which is the one mostly used in current architecture.

\subsection{Other related works}

Other methods have been proposed to accelerate tensor product in a variety of specific setups. In particular, ~\citet{pmlr-v202-passaro23a} proposed a method to accelerate tensor products between generic equivariant features and a vector of spherical harmonic features, relying on a well-chosen rotation of the feature vectors. \citet{li2025eformer} considered a similar setup in the context of equivariant graph neural networks. Their proposed method replaces edge-based tensor products with node-based operations by relying on a binomial expansion of the spherical harmonic feature vector. Unlike for the GTP and VSTP, these methods cannot by directly be applied to tensor products between generic feature vectors, and we do not discuss them in this work.

Shortly after the initial submission of this work, \citet{bochkarev2026} independently proposed a similar method to generalize GTP to anti-symmetric cases.
\section{Integral Formula for Anti-Symmetric Tensor Products}
\label{sec: main-results}

In this section we present our main findings, namely integral formulas involving gradients of the spherical harmonics that allow us to simplify VSTP and to propose an easy-to-implement extension of GTPs to anti-symmetric cases of CGTP. Our first result is the following integral equation.

\begin{theorem}
\label{thm: antisym-integral-formula}
Let $(l_1,l_2,l_3)$ be a triplet satisfying the anti-symmetric condition, i.e. $l_1+l_2-l_3$ is odd. Then there exists $\tilde{V}^{l_3}_{l_1, l_2}\neq 0$ such that
\bea \label{eq: main_result_odd}
    \int_{\mathbb{S}^{2}} \left(\left(\nabla Y_{l_1m_1}\times \nabla Y_{l_2m_2}\right)\cdot \hat{r}\right)Y_{l_3m_3}\mathrm{d}\mu_{\mathbb{S}^{2}}
    = \tilde{V}^{l_3}_{l_1,l_2}C^{l_3m_3}_{l_1m_1, l_2 m_2},
\eea
\end{theorem}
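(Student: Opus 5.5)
The plan is to split the statement into two parts: a representation-theoretic part, which shows that the left-hand side is proportional to the Clebsch–Gordan coefficients, and a computational part, which shows that the constant $\tilde{V}^{l_3}_{l_1,l_2}$ is nonzero. The nonzero part is where the work lies.

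\emph{Proportionality.} Consider the trilinear form
\[
T(f,g,h)=\int_{\mathbb{S}^2}\big((\nabla f\times\nabla g)\cdot\hat r\big)\,h\,\mathrm{d}\mu_{\mathbb{S}^2},
\]
defined on harmonic polynomials of degrees $l_1,l_2,l_3$. The radial parts of the gradients drop out after the cross product is dotted with $\hat r$, so only the surface gradients $\nabla_S$ matter.

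For $R\in\mathrm{SO}(3)$ we have $\nabla(f\circ R^{-1})=R\,(\nabla f)\circ R^{-1}$. We also have $(Ra)\times(Rb)=R(a\times b)$, and the measure is rotation invariant. Together these give $T(f\circ R^{-1},g\circ R^{-1},h\circ R^{-1})=T(f,g,h)$.

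Hence $T$ is an invariant vector in $V_{l_1}\otimes V_{l_2}\otimes V_{l_3}$. By multiplicity one, this space of invariants is spanned by the Clebsch–Gordan tensor. This yields the identity with some real constant $\tilde V^{l_3}_{l_1,l_2}$, which may a priori be zero.

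As a consistency check, inversion $\hat r\mapsto-\hat r$ shows that the integrand has parity $(-1)^{l_1+l_2+l_3+1}$. So $T$ vanishes when $l_1+l_2-l_3$ is even, matching the antisymmetry of $T$ in its first two arguments and the symmetry relation \eqref{eq:cg-coef-sym}.

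\emph{Computing the constant.} Since the constant does not depend on the choice of orthonormal basis, I would compute it in the complex basis using spin-weighted harmonics. Write
\[
\nabla_S Y_{lm}=\sqrt{l(l+1)/2}\,\big({}_{-1}Y_{lm}\,e_{+}-{}_{1}Y_{lm}\,e_{-}\big)
\]
(up to sign conventions), with helicity vectors $e_\pm$ satisfying $\hat r\cdot(e_+\times e_-)=\pm i$.

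The integrand then becomes
\[
i\sqrt{l_1(l_1+1)l_2(l_2+1)}\,\tfrac12\big[{}_{1}Y_{l_1m_1}\,{}_{-1}Y_{l_2m_2}-{}_{-1}Y_{l_1m_1}\,{}_{1}Y_{l_2m_2}\big]\,Y_{l_3m_3}.
\]
Apply the standard triple-integral formula for spin-weighted harmonics:
\[
\int {}_{s_1}Y\,{}_{s_2}Y\,{}_{s_3}Y=\sqrt{\tfrac{(2l_1+1)(2l_2+1)(2l_3+1)}{4\pi}}\begin{pmatrix}l_1&l_2&l_3\\m_1&m_2&m_3\end{pmatrix}\begin{pmatrix}l_1&l_2&l_3\\-s_1&-s_2&-s_3\end{pmatrix}.
\]
Next use $\begin{pmatrix}l_1&l_2&l_3\\-1&1&0\end{pmatrix}=(-1)^{l_1+l_2+l_3}\begin{pmatrix}l_1&l_2&l_3\\1&-1&0\end{pmatrix}$, and convert the $m$-dependent $3j$ symbol into $C^{l_3m_3}_{l_1m_1,l_2m_2}$, including the real-basis change, which is unitary and preserves proportionality. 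In the odd case this gives, up to a nonzero normalization factor,
\[
\tilde V^{l_3}_{l_1,l_2}\propto\sqrt{l_1(l_1+1)\,l_2(l_2+1)\,(2l_1+1)(2l_2+1)}\begin{pmatrix}l_1&l_2&l_3\\1&-1&0\end{pmatrix}.
\]

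\emph{Main obstacle: nonvanishing.} First, $l_1,l_2\ge1$ automatically: if $l_1=0$, the triangle condition forces $l_3=l_2$, so $l_1+l_2-l_3$ is even. It therefore remains to show that $\begin{pmatrix}l_1&l_2&l_3\\1&-1&0\end{pmatrix}\neq0$ whenever the triangle condition holds and $l_1+l_2+l_3$ is odd.

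Here I would try, first, the closed-form expression for $3j$ symbols with $m=(1,-1,0)$ when $J=l_1+l_2+l_3$ is odd. This is obtained by relating them to $\begin{pmatrix}l_1&l_2&l_3\\0&0&0\end{pmatrix}$-type symbols with shifted arguments, via the recursion in $m$ from applying $L_\pm$. The goal is a product of factorials that is manifestly nonzero.

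As a fallback, I would use a direct test-function argument. Take $f=(x+iy)^{l_1}$ and $g=z(x+iy)^{l_2-1}$, compute $(\nabla f\times\nabla g)\cdot\hat r$ explicitly, and show that its component of harmonic degree $l_3$ and order $m_1+m_2$ is nonzero. A single nonzero entry suffices, because the proportionality has already been established.
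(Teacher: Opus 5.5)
\textbf{The gap: you never show that the constant is nonzero.} Your reduction is correct, and it takes a genuinely different route from the paper's. Invariance plus multiplicity one gives proportionality directly. The helicity decomposition of $\nabla_S Y$ then turns the constant into a nonzero multiple of $\begin{pmatrix}l_1&l_2&l_3\\1&-1&0\end{pmatrix}$. The paper instead needs the VSH cross-product identity with 9j symbols.

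However, the only nontrivial content of the theorem is $\tilde V^{l_3}_{l_1,l_2}\neq 0$, and you leave that step open. Neither route you sketch would close it as described:
\begin{itemize}
\item \textbf{The $L_\pm$ recursion.} These recursions act at fixed $(l_1,l_2,l_3)$, so they cannot produce $(0,0,0)$ symbols with shifted arguments. At fixed $l$'s with $J=l_1+l_2+l_3$ odd, the $(0,0,0)$ term drops out. The recursion started at $(m_1,m_2,m_3)=(1,0,0)$ then only yields $\sqrt{l_3(l_3+1)}\begin{pmatrix}l_1&l_2&l_3\\1&0&-1\end{pmatrix}=-\sqrt{l_2(l_2+1)}\begin{pmatrix}l_1&l_2&l_3\\1&-1&0\end{pmatrix}$. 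That is a relation between permuted symbols, not an evaluation.
\item \textbf{The test-function fallback.} Your choice $f=(x+iy)^{l_1}$, $g=z(x+iy)^{l_2-1}$ has total order $m_1+m_2=l_1+l_2-1$. So the degree-$l_3$ component of $(\nabla f\times\nabla g)\cdot\hat r$ vanishes identically unless $l_3\geq l_1+l_2-1$. With odd parity, this test covers only $l_3=l_1+l_2-1$. For smaller $l_3$ you must take $|m_1+m_2|\leq l_3$, and the degree-$l_3$ projection is then exactly the computation you were trying to avoid.
\end{itemize}

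\textbf{How to close it.} You need an argument valid for every admissible $l_3$; either of the following would complete your proof.

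One option is to prove a closed form. With $g=(J-1)/2$, one can check that
\[
\begin{pmatrix}l_1&l_2&l_3\\1&-1&0\end{pmatrix}=(-1)^{g+1}\,\frac{2(g+1)}{\sqrt{l_1(l_1+1)\,l_2(l_2+1)}}\left[\frac{(J-2l_1)!\,(J-2l_2)!\,(J-2l_3)!}{(J+1)!}\right]^{1/2}\frac{g!}{(g-l_1)!\,(g-l_2)!\,(g-l_3)!}.
\]
This is manifestly nonzero, because odd parity and the triangle condition force $g-l_i\geq 0$. In the top case $l_3=l_1+l_2-1$, Racah's sum has only two terms and gives this at once. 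The general case still requires an honest derivation, from Racah's sum or from an identity that shifts $l$.

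The paper instead obtains the needed shift in $l$ geometrically:
\begin{itemize}
\item It writes $\hat r Y_{l_3m_3}$ in terms of $\vec Y_{l_3,l_3\pm1,m_3}$. This expresses the constant through $C^{(l_3\pm1)0}_{l_10,l_20}$, which for odd parity are both nonzero and have opposite signs.
\item These are multiplied by two 9j symbols of a common sign that never vanish simultaneously.
\item Hence the two contributions add rather than cancel.
\end{itemize}

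Without one of these ingredients, you have only shown that the left-hand side is some multiple, possibly zero, of the Clebsch--Gordan coefficient.
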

\begin{proof} \renewcommand\qedsymbol{}
See Section~\ref{sec: proof-thm-1} of the appendix.
\end{proof}

\begin{figure*}[t]
  \centering
  \includegraphics[width=\textwidth]{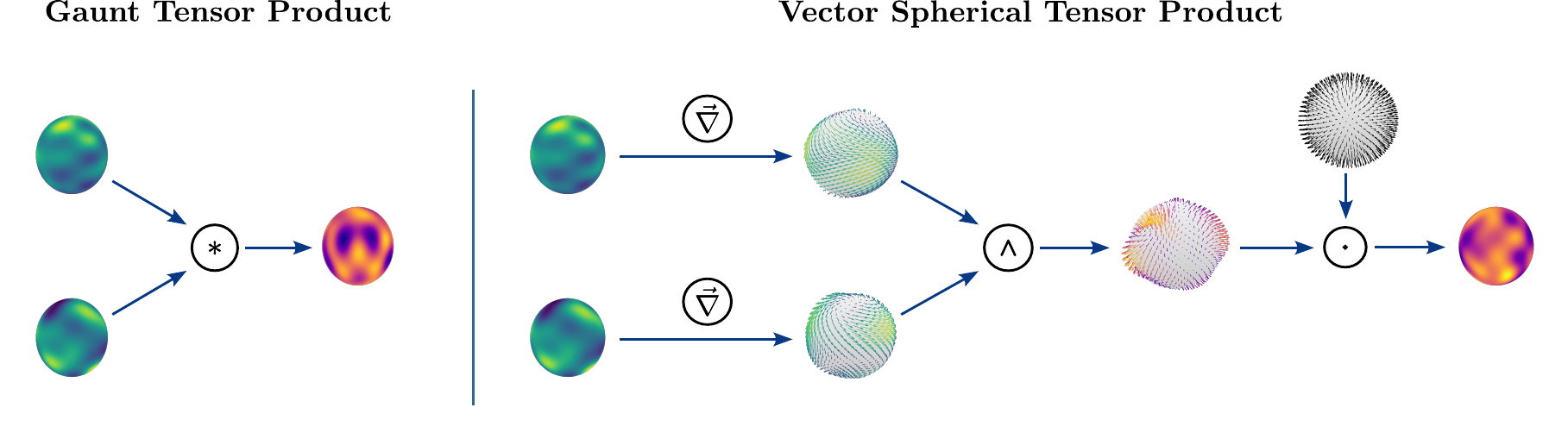}

  \caption{%
    Sketch of the Gaunt Tensor Product and our simplified Vector Signal Tensor Products. The inputs irrep features are first transformed into scalar signals on the sphere using spherical harmonics. Gaunt Tensor Products operates the pointwise product of the signals and the resulting signal is turned back into an irrep features by projecting on spherical harmonics. To obtain the anti-symmetric counter part, VSTP takes the cross product of the signals gradients and project the result pointwise on the radial vectors. The resulting scalar signal is then projected back to irrep features as for Gaunt Tensor Products.
  }
  \label{fig:vstp_gtp_sketch}
\end{figure*}

This formula yields an integral expression of the CGTP in the anti-symmetric cases. 
Since it involves the cross product of spherical harmonics gradients, which can be expressed in terms of VSH, this expression corresponds to a specific interaction involved in the VSTP proposed by~\citet{xie2026asymptoticallyfastclebschgordantensor}. Given this connection, we propose redefining the VSTP as
\bea
\int_{\mathbb{S}^{2}} \left((\nabla \langle h^{l_1}, Y_{l_1}\rangle\times \nabla \langle h^{l_2}, Y_{l_2}\rangle)\cdot \hat{r}\right)Y_{l_3m_3}\mathrm{d}\mu_{\mathbb{S}^{2}}
:= \left(h^{l_1}\otimes_{\mathrm{VSTP}} h^{l_2}\right)^{l_3m_3}\,.
\eea
The previous theorem yields
\bea
\left(h^{l_1}\otimes_{\mathrm{VSTP}} h^{l_2}\right)^{l_3m_3} = \tilde{V}^{l_3}_{l_1,l_2}\left(h^{l_1}\otimes h^{l_2}\right)^{l_3m_3}\,.
\eea
Note that the above formula hold true for all triplets $(l_1, l_2, l_3)$ provided one set $\tilde{V}^{l_3}_{l_1, l_2}=0$ whenever the triplet corresponds to a symmetric CGTP.

Using the previous result, we derive the following integral formula encompassing both the symmetric and the anti-symmetric cases.

\begin{theorem}
\label{thm: total-integral-formula}
For all triplets $(l_1, l_2, l_3)$, there exists $\Gamma^{l_3}_{l_1, l_2}\neq0$ such that
\bea\label{eq: combined_tp}
 \left(h^{l_1}\otimes h^{l_2}\right)^{l_3m_3} = \Gamma^{l_3}_{l_1, l_2}\int_{\mathbb{S}^{2}} &\left(\langle h^{l_1}, Y_{l_1}\rangle\hat{r} + \hat{r}\times \nabla \langle h^{l_1}, Y_{l_1}\rangle\right)\cdot \left(\langle h^{l_2}, Y_{l_2}\rangle\hat{r}+\nabla \langle h^{l_2}, Y_{l_2}\rangle\right)Y_{l_3 m_3}\mathrm{d}\mu_{\mathbb{S}^{2}}\,,
\eea
\end{theorem}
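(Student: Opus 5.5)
Expand the integrand by bilinearity into four terms, using $F_1=\langle h^{l_1},Y_{l_1}\rangle$ and $F_2=\langle h^{l_2},Y_{l_2}\rangle$. We use that $\hat{r}\cdot\hat{r}=1$. We also use that the gradients here are surface gradients, i.e. of the degree-$0$ homogeneous extensions, so they are tangent to the sphere: $\hat{r}\cdot\nabla F_2=0$ and $(\hat{r}\times\nabla F_1)\cdot\hat{r}=0$. (If $\nabla$ denotes the ambient gradient of the harmonic polynomial, its radial part is $l\,F\hat r$. That only rescales the radial contributions by constants depending on $l_1,l_2$, and the argument below adapts.) Of the four terms, two vanish and
\begin{equation*}
(F_1\hat r+\hat r\times\nabla F_1)\cdot(F_2\hat r+\nabla F_2)=F_1F_2+(\hat r\times\nabla F_1)\cdot\nabla F_2 .
\end{equation*}
By the scalar triple product identity, $(\hat r\times\nabla F_1)\cdot\nabla F_2=(\nabla F_1\times\nabla F_2)\cdot\hat r$. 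The integral on the right of \eqref{eq: combined_tp} is therefore
\begin{equation*}
(h^{l_1}\otimes_{\mathrm{Gaunt}}h^{l_2})^{l_3m_3}+(h^{l_1}\otimes_{\mathrm{VSTP}}h^{l_2})^{l_3m_3}=\big(\tilde G^{l_3}_{l_1,l_2}+\tilde V^{l_3}_{l_1,l_2}\big)(h^{l_1}\otimes h^{l_2})^{l_3m_3},
\end{equation*}
by \eqref{eq: gtp-vectorvaluedintegral}, the Gaunt relation \eqref{eq: gcoef-integral}, and Theorem~\ref{thm: antisym-integral-formula} together with the convention $\tilde V=0$ in the symmetric case.

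It then remains to show $\tilde G^{l_3}_{l_1,l_2}+\tilde V^{l_3}_{l_1,l_2}\neq0$ for every triangular triplet, and to set $\Gamma^{l_3}_{l_1,l_2}=(\tilde G^{l_3}_{l_1,l_2}+\tilde V^{l_3}_{l_1,l_2})^{-1}$. We split by parity of $l_1+l_2-l_3$.

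\textbf{Odd case.} Here $\tilde G=0$, as already noted from \eqref{eq:cg-coef-sym}, and $\tilde V\neq 0$ by Theorem~\ref{thm: antisym-integral-formula}.

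\textbf{Even case.} Here $\tilde V=0$, and we need $\tilde G\neq0$. This is the classical fact that the Gaunt coefficient is proportional to $\begin{pmatrix} l_1&l_2&l_3\\0&0&0\end{pmatrix}$. This $3j$ symbol is nonzero exactly when $l_1+l_2+l_3$ is even and the triangle condition holds, as its explicit closed form is a nonvanishing ratio of factorials. Equivalently, $\tilde G^{l_3}_{l_1,l_2}\propto C^{l_30}_{l_10,l_20}$, which is nonzero in the even case.

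The one point needing care is the vanishing of the cross terms. With surface gradients it is immediate. If the paper's $\nabla$ were the ambient gradient, the two radial cross terms would still be harmless, because the only radial term that survives is proportional to $F_1F_2$ and thus contributes another multiple of $\tilde G$ in the even case. That multiple would need to be checked not to cancel $\tilde G$. I would first verify which convention the appendix's VSH decomposition uses, and then confirm the resulting combined coefficient is nonzero via the same $3j$ argument.
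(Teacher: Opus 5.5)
Your proof is correct and follows the paper's argument. You expand the integrand, drop the two cross terms because the gradients are tangent to the sphere, identify the GTP and (via the cyclic triple-product identity) the VSTP, and invert $\tilde G^{l_3}_{l_1,l_2}+\tilde V^{l_3}_{l_1,l_2}$. Your explicit check that $\tilde G^{l_3}_{l_1,l_2}\propto C^{l_30}_{l_10,l_20}\neq 0$ in the even case fills in a step the paper only asserts. Your worry about the ambient-gradient convention is moot, since the paper's proof uses $(\hat r Y_{l_1m_1})\cdot\nabla Y_{l_2m_2}=0$, i.e. surface gradients.
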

\begin{proof}
Using the identity of the mixed product under cyclic permutation
\bea
(\vec{a}\times\vec{b})\cdot \vec{c} = (\vec{c}\times \vec{a})\cdot \vec{b}
\eea
one can rewrite the VSTP of $h^{l_1}, h^{l_2}$ as
\bea\label{eq: vstp-rewrite}
\int_{\mathbb{S}^{2}} \left((\hat{r}\times \nabla \langle h^{l_1}, Y_{l_1}\rangle)\cdot (\nabla \langle h^{l_2}, Y_{l_2}\rangle)\right)Y_{l_3m_3}\mathrm{d}\mu_{\mathbb{S}^{2}}\,.
\eea 
To encompass both the symmetric and the anti-symmetric case in a single formula, one can simply sum the GTP and VSTP, which gives
\bea
 \left(h^{l_1}\otimes_{\mathrm{Gaunt}} h^{l_2}\right)^{l_3m_3}+  \left(h^{l_1}\otimes_{\mathrm{VSTP}} h^{l_2}\right)^{l_3m_3} = \left(\tilde{G}^{l_3}_{l_1, l_2}+\tilde{V}^{l_3}_{l_1, l_2}\right) \left(h^{l_1}\otimes h^{l_2}\right)^{l_3m_3}\,.
\eea
As the coefficients $\tilde{G}^{l_3}_{l_1, l_2}$ and $\tilde{V}^{l_3}_{l_1, l_2}$ are non-zero on distinct triplets, their sum is always non-zero so we can define
\bea
\Gamma^{l_3}_{l_1, l_2} := \left(\tilde{G}^{l_3}_{l_1, l_2}+\tilde{V}^{l_3}_{l_1, l_2}\right)^{-1},
\eea
and we have $\Gamma^{l_3}_{l_1, l_2}\neq 0$ for all $(l_1, l_2, l_3)$. Using the facts that
\bea
(\hat{r}Y_{l_1m_1})\cdot(\nabla Y_{l_2m_2}) &= 0\,,\\
(\hat{r}Y_{l_1m_1})\cdot(\hat{r}\times\nabla Y_{l_2m_2}) &= 0\,,\\
(\hat{r}Y_{l_1m_1})\cdot(\hat{r}Y_{l_2m_2}) &= Y_{l_1m_1}Y_{l_2m_2}
\eea
and expanding the terms in the theorem's integral, we recognize the integral expressions of the GTP (Eq. \ref{eq: gtp-vectorvaluedintegral})  and the VSTP (Eq. \ref{eq: vstp-rewrite}), which proves the result.
\end{proof}

These results simplify the VSTP proposed by \citet{xie2026asymptoticallyfastclebschgordantensor} by reducing the number of VSTP interactions to consider when simulating a given CGTP from up to $9$ possibilities to a single one. Moreover it allows for a clear and simple implementation involving only standard irrep features $h^{l}\in\mathbb{R}^{2l+1}$ instead of the tensor-valued features used in the VSTP proposed by~\citet{xie2026asymptoticallyfastclebschgordantensor}. In fact, one can straightforwardly adapt the spherical cubature implementation introduced in Ref.~\cite{xie2025the} to the previous formulas. 

Our result provide an intuitive interpretation of the VSTP in terms of signals on the sphere, which is represented in Figure~\ref{fig:vstp_gtp_sketch}. The GTP let the features interact by multiplying their corresponding signals pointwise on the sphere. As this operation is symmetric, so are the GTP. Instead, the VSTP let signals interact through their Poisson bracket \cite{lee2003smooth} given by the cross product of the gradients projected along the radial vector. Similarly to the GTP, the VSTP inherits the anti-symmetry of the cross product and it allows to recover the anti-symmetric part of the CGTP.

\section{Expressivity-Runtime Tradeoff of Integral-Based Tensor Products}
\label{sec: expressivity}

In this section we discuss the expressivity-runtime tradeoff of GTP and VSTP, building on the insights of Refs. \cite{xie2025the, xie2026asymptoticallyfastclebschgordantensor}. We focus on GTP to simplify the presentation, although the discussion holds true for all similar integral-based tensor products, including VSTP and our approach. 

In the context of $\mathrm{SO}(3)$-equivariant neural networks, tensor products are used to build equivariant bilinear layers with learnable parameters. \citet{xie2025the} rely on this observation to define a measure of expressivity for the layers built on different types of tensor products. They then compare this measure of expressivity to the corresponding runtime. The same authors extend their analysis to VSTP and detail their findings depending on the number of irreps in the input and output in Ref.~\cite{xie2026asymptoticallyfastclebschgordantensor}.

Here we only discuss the case of tensor products between input features with multiple irrep types, as it is the one for which the integral formulations of tensor products find their relevance. This setup is refereed to as a multiple-input-multiple-output (MIMO) setup in Refs.~\cite{xie2025the, xie2026asymptoticallyfastclebschgordantensor}.

Suppose one wants to build an equivariant bilinear layer acting on inputs
\bea
h^{1} = \bigoplus_{l_1=0}^{L_1} h^{l_1}\,,\quad h^{2} = \bigoplus_{l_2=0}^{L_2} h^{l_2}
\eea
and producing an output feature
\bea
h^{3} = \bigoplus_{l_3=0}^{L_3}h^{l_3}\,.
\eea
Due to the universal property of the CGTP, the most general form of such a layer is given by
\bea
h^{3} &= \bigoplus_{l_3=0}^{L_3}\left( \sum_{l_1=0}^{L_1}\sum_{l_2=0}^{L_2}w^{l_3}_{l_1, l_2}(h^{l_1} \otimes h^{l_2})^{l_3}\right)\,,
\eea
where $w^{l_3}_{l_1, l_2}$ are learnable weights parameterizing the layer. To simplify the discussion, we suppose that $L_1=L_2=L_3=L$. As discussed in Refs.~\cite{xie2025the, xie2026asymptoticallyfastclebschgordantensor}, the runtime associated with this operation scales as $\mathcal{O}(L^{5})$ when leveraging the sparsity of the Clebsch-Gordan coefficients.

An important aspect of the previous layer is that the CGTP allows to have different weights $w^{l_3}_{l_1, l_2}$ for each possible irreps path $(l_1,l_2)\to l_3$. Integral representations of the tensor product allows to reduce the complexity of the previous layer upon assuming the learnable weights can be factorized, i.e.
\bea
w^{l_3}_{l_1, l_2} = w^{l_3}w_{l_1}w_{l_2}\,.
\eea
In this case, replacing the CGTP by a GTP and using the corresponding integral formula yields
\bea
h^{l_3} &= \sum_{l_1}^{L}\sum_{l_2}^{L}w^{l_3}_{l_1, l_2}(h^{l_1} \otimes_{\mathrm{Gaunt}} h^{l_2})^{l_3}\\
&=\sum_{l_1=0}^{L}\sum_{l_2=0}^{L}w^{l_3} w_{l_1} w_{l_2}\int_{\mathbb{S}^{2}}F_{h^{l_1}}F_{h^{l_2}}Y_{l_3}\mathrm{d}\mu_{\mathbb{S}^2}\\
&= w^{l_3}\int_{\mathbb{S}^{2}}\left(\sum_{l_1=0}^{L} w_{l_1}F_{h^{l_1}}\right)\left(\sum_{l_2=0}^{L} w_{l_2}F_{h^{l_2}}\right)Y_{l_3}\mathrm{d}\mu_{\mathbb{S}^2}\,.
\eea

This formula has the advantage that the signals corresponding to the different irreps order of the different input features are factorized and can be summed before the product and the integration are performed. As shown by \citet{xie2025the}, this significantly reduce the cost of the corresponding layer to $\mathcal{O}(L^3)$ for a simple spherical design evaluation and to $\mathcal{O}(L^2\log^2(L))$ for the S2FFT evaluation proposed in Ref.~\cite{xie2025the}. Note that the authors mention that their S2FFT evaluation method is more efficient only asymptotically in $L$ and that it doesn't result in a practical speed-up below $L\sim 1000$, which is far beyond the values of $L$ used in current architectures.

The quadratic asymptotic speed-up in $L$ of GTP relies on the weights factorization, which has a strong impact on the expressivity of the corresponding layer. \citet{xie2025the} define the expressivity of the layers above as the dimension of the manifold of the associated bilinear maps. In the case of CGTP, the weights do not need to be factorized so this dimension is of the order of the number of path allowed by the Clebsch-Gordan selection rules, which scales as $\mathcal{O}(L^3)$. On the other hand, the GTP advantageous runtime assumes the factorization of the weights, which yields a corresponding dimension scaling in $\mathcal{O}(L)$. Consequently, \citet{xie2025the} conclude that the CGTP and the GTP layers share the same $\mathcal{O}(L^2)$ asymptotic runtime-expressivity ratio.

Although the analysis of \citet{xie2025the} offers a clear insight on the asymptotic expressivity-runtime tradeoff, we argue that the GTP can still prove useful when the learnable weight $w^{l_3}_{l_1, l_2}$ admits a rank $R$ decomposition~\cite{kolda2009tensor}
\bea\label{eq: low_rank_decom}
w^{l_3}_{l_1, l_2} = \sum_{k=1}^{R} w^{(k) l_3}w^{(k)}_{l_1}w^{(k)}_{l_2}\,.
\eea
For such weights the GTP layer yields a runtime scaling as $\mathcal{O}(RL^3)$, whereas the corresponding CGTP layer does not leverage the weights decomposition and yields the same $\mathcal{O}(L^5)$ runtime. Thus it appears that the expressivity measure presented by \citet{xie2025the} is a worst-case scenario measure which assumes the learnable weights have full rank $R=\mathcal{O}(L^2)$.

This analysis shows that integral-based tensor products as the ones considered in this paper allow to control the runtime-expressivity tradeoff by enabling shorter runtime for bilinear layers which parameters $w^{l_3}_{l_1, l_2}$ admit a low rank decomposition. It would be interesting to investigate this tradeoff numerically in future work.

\section{Normalization}
When the antisymmetric GTP in Equation \eqref{eq: main_result_odd}, or the full tensor product in Equation \eqref{eq: combined_tp}, are used as learnable layers in an equivariant neural network, it is important that the coupling coefficients are properly normalized. One typically wishes the magnitude of the resulting features to be comparable to those produced by the standard CGTP, so that layer initialization remains well-conditioned and does not introduce artificial scale differences across angular momentum channels.

While the Clebsch-Gordan tensor $C^{l_3 m_3}_{l_1 m_1\,l_2 m_2}$ is
orthonormal by construction, the symmetric and antisymmetric coefficients $T_{l_1, l_2, l_3} \in \{ \tilde{G}_{l_1, l_2, l_3}, \tilde{V}_{l_1, l_2, l_3}\}$ introduce an angular-momentum-dependent scale. Consequently, if the tensor product is used directly in a neural network layer, different $(l_1,l_2,l_3)$ channels may produce outputs with substantially different magnitudes.

A natural strategy is therefore to explicitly normalize the tensor
product by multiplying each channel by the inverse factor
$T_{l_1 l_2 l_3}^{-1}$.  However, in an efficient implementation
of GTP \cite{luo2024enabling, xie2025the}, as described in Section \ref{sec: expressivity}, the spherical integral is evaluated using a factorized representation that separates the dependence on $l_1$, $l_2$, and $l_3$.  Introducing a fully dense normalization tensor would destroy this factorization and therefore remove the computational advantage. 

To preserve the factorized structure, in Appendix \ref{app:lowrank}, we instead seek approximate low-rank decompositions~\cite{kolda2009tensor} of the inverse coupling tensors of the form
\begin{equation}
\label{eq: decomp}
  (T_{l_1 l_2 l_3})^{-1}
  \;\approx\;
  \sum_{r=1}^{R}
  a_{l_1,r}\,
  b_{l_2,r}\,
  c_{l_3,r},
\end{equation}
where the vectors $a_{l_1,r}$, $b_{l_2,r}$ and $c_{l_3,r}$ depend only
on a single angular momentum index.  Substituting this form into the
tensor product preserves the separability of the computation and
allows the normalization to be implemented with negligible additional
cost (a factor of $R$, which can be evaluated in parallel). As shown in Appendix~\ref{app:lowrank}, we find empirically that $(T_{l_1, l_2}^{l_3})^{-1}$ is intrinsically low rank, at least if we only care about order of magnitude accuracy . In particular, a rank-$2$ decomposition of $\tilde{V}_{l_1 l_2 l_3}^{-1}$ reproduces normalization of the full tensor over a wide range of angular momenta that were tested (up to $L_{max} < 20$). By contrast, a rank-$1$ approximation of $\tilde{V}$ fails qualitatively and cannot reproduce the structure of the tensor. Conversely, we find that $\tilde{G}^{-1}_{l_1, l_2, l_3}$ is well approximated by a rank-$1$ tensor. The difference between the approximate ranks of the antisymmetric and symmetric Gaunt coefficients likely arises from the two distinct coupling pathways $l_3 \pm 1$ appearing in Equation \eqref{eq: lambda}, which introduces two dominant modes in the coupling tensor.

In practice, this low-rank representation provides a way to initialize Gaunt tensor-product layers with a scale comparable to CGTP while preserving the factorization structure that enables efficient implementations. For example, using Equation \eqref{eq: low_rank_decom}, a simple way to initialize a neural network with the same scale as the Clebsch-Gordan coefficients would be to choose the initialization 
    \bea
w^{l_3}_{l_1, l_2} = \sum_{k=1}^{R} a_{k, l_1} b_{k, l_2} c_{k, l_3} w^{(k) l_3}w^{(k)}_{l_1}w^{(k)}_{l_2}\,.
\eea
where $a_{k, l_1} b_{k, l_2} c_{k, l_3}$ are the low rank matrices found in Equation \eqref{eq: decomp} and $w^{(k) l_3}w^{(k)}_{l_1}w^{(k)}_{l_2}$ are initialized around the identity.

\section{Conclusion}

In this work, we have arrived at an integral representation (Equation \eqref{eq: main_result_odd}) that generalizes the Gaunt coefficient formula for Clebsch-Gordan coefficients to the antisymmetric case. From this result we obtained Equation \eqref{eq: combined_tp}, a single universal integral expression that encompasses both the antisymmetric and symmetric components of the Clebsch-Gordan coefficients. Such integral formulas have well known and efficient implementation schemes \cite{luo2024enabling, xie2025the} that can be directly applied to this work. Our formulation further shows that a single VSTP is enough to simulate the CGTP  \cite{xie2026asymptoticallyfastclebschgordantensor}. Crucially, the integral formula uses only standard irrep features $h^{l}$, instead of the more general tensor-valued features, which ultimately reduces the number of VSTPs required by a factor of up to 9.

Beyond computational efficiency, having analytic expressions for the even and odd-parity Gaunt coefficients is particularly important for applications to equivariant neural networks. The expressions derived in Appendix \ref{sec: proof-thm-1} allow one to directly study the magnitude and scaling behavior of the coupling coefficients across the angular momentum channels. Using the explicit form of the coefficients derived in this work, we showed in Appendix \ref{app:lowrank} that the resulting normalization tensors admit accurate low-rank approximations. This enables efficient normalization schemes that preserve the factorized structure of the tensor product and therefore maintain the computation advantages of the VSTP construction. 

Together, these results provide a practical recipe for incorporating integral-based tensor products into $\mathrm{SO}(3)$-equivariant neural networks and for controlling the speed-expressivity tradeoff in equivariant architectures. A natural direction for future work is to evaluate the proposed implementation on concrete learning tasks, for
example within machine-learning interatomic potential (MLIP) models~\cite{batatia2022mace, wood2026umafamilyuniversalmodels, brunken2025mlip}, where efficient and well-conditioned equivariant tensor products are critical for scalability and performance.

\bibliography{main}
\newpage
\appendix
\onecolumn
\section{Appendix}\label{sec: appendix}

Unless otherwise stated, in this appendix we use complex spherical harmonics which we denote $Y^{l}_{m}$, and we write $C^{l_3m_3}_{l_1m1; l_2m_2}$ the Clebsch-Gordan coefficients in this basis. The spin components of the vector spherical harmonics are defined in the spherical coordinates basis. We follow the notations of Ref.~\cite{Varshalovich1988} and denote respectively $\cdot$ the Euclidian dot product and $\times$ the cross product in the Cartesian basis. We show explicitly how to transform our results to the real spherical harmonics basis used in the main text in Section \ref{sec: transformation}.

\subsection{Vector Harmonics}\label{sec: vector_harmonics}
In this Section, we introduce the relevant background on vector spherical harmonics used in the appendix. For a more detailed overview coverage of vector harmonics we recommend \cite{Varshalovich1988}. 

As in \cite{Varshalovich1988, xie2026asymptoticallyfastclebschgordantensor}, we define the tensor (spin-$s$) spherical harmonics by coupling
orbital angular momentum $l$ with spin $s$
\begin{equation}\label{eq: tensor_harmonic}
\big(Y^{l,s}_{j m}(\hat{r})\big)_{m_s}
=
\sum_{m_l}
C^{j m}_{l m_l,\, s m_s}\,
Y^l_{m_l}(\hat{r})\,.
\end{equation}
The vector harmonics arise from the $s=1$ components, and for notational convenience, we will write the $s=1$ vector harmonic with components $m_s$ as  $(\vec{Y}_{j,l,m})_{m_s}: = (Y_{lm}^{j 1})_{m_s}$. Note that for the vector harmonics, the selection rules dictate that $j \in \{l-1, l, l+1\}, j \geq 0$. The tensor harmonics obey the orthonormality condition 
\begin{equation}\label{eq: orthonormality}
     \int_{\mathbb{S}^2}
Y^{l,s}_{j m}(\hat{r}) \cdot 
Y^{l',s'}_{j' m'}(\hat{r})^* \  d\mu_{\mathbb{S}^2}(\hat{r}) =   \int_{\mathbb{S}^2}
\sum_{m_s}
\big(Y^{l,s}_{j m}(\hat{r})\big)_{m_s}
\big(Y^{l',s'}_{j' m'}(\hat{r})\big)^{*}_{m_s}  d\mu_{\mathbb{S}^2}(\hat{r})
= 
\delta_{ll'}\,
\delta_{ss'}\,
\delta_{j j'}\,
\delta_{m m'}.
\end{equation}
The conjugation of the harmonics with the dot product arises because we use complex spherical coordinates, and so the correct inner product is the Hermitian inner product \cite{Varshalovich1988}.

For our purposes, it is useful to relate the vector harmonics to the spherical harmonics through application of standard differential operators. In particular, the $j=l$ component of the vector harmonics is known to arise from application of the angular momentum operator $\hat{L} = -i (\hat{r} \times \nabla)$ to the spherical harmonics \cite{Varshalovich1988} 
\begin{equation}\label{eq: ang_mom}
\hat{L} Y^{l}_{m}
=
\sqrt{l(l+1)}\,\vec{Y}_{l,l,m}.
\end{equation}

Similarly, the $j = l \pm 1$ components of the vector harmonics have known relations in terms of the gradient of the spherical harmonics \cite{Jackson, Varshalovich1988}
\begin{equation}\label{eq: higher_j_def}
\begin{split}
\vec{Y}_{l-1,l,m}
&=
-{\sqrt{\frac{2l+1}{l}}}
\left[
l\,\hat{r} - r\,\nabla
\right]
Y^{l}_{m},
\qquad l \neq 0\,,
\\
\vec{Y}_{l+1,l,m}
&=
-{\sqrt{\frac{2l+1}{l+1}}}
\left[
(l+1)\,\hat{r} - r\,\nabla
\right]
Y^{l}_{m}\,,
\end{split}
\end{equation}
where $r = ||\hat{r}||$. 

\subsection{Proof of Theorem 1}\label{sec: proof-thm-1}
In this Section, we prove our main result in Equation \eqref{eq: main_result_odd}. We first show that for complex spherical harmonics
\begin{equation}\label{eq: as_gaunt}
    \int_{\mathbb{S}^{2}}(\nabla Y^{l_1}_{m_1}\times \nabla Y^{l_2}_{m_2})\cdot (\hat{r} Y^{l_3}_{m_3})^* \mathrm{d}\mu_{\mathbb{S}^{2}}(\hat{r}) = \Lambda_{l_1 l_2 l_3}C^{l_3m_3}_{l_1m_1, l_2 m_2},
\end{equation}

where $\Lambda_{l_1, l_2, l_3} \neq 0$ is non-zero whenever a $l_1, l_2, l_3$ path is allowed by the Clebsch-Gordan coefficients and ${l_1 + l_2 - l_3 = 2k+1}$. The proof follows from a number of useful tensor identities that can be found in \cite{Varshalovich1988, xie2026asymptoticallyfastclebschgordantensor}. We derive Equation \eqref{eq: as_gaunt} using the complex harmonics in a spherical coordinate system, and hence the Clebsch-Gordan coefficients should be understood in this basis according to the Condon–Shortley convention. In Section \ref{sec: transformation}, we transform this equation to real harmonics in Cartesian coordinates and relate it to the rotated Clebsch-Gordan coefficients.

We first relate Equation \eqref{eq: as_gaunt} to the vector spherical harmonics using the identities of Section \ref{sec: vector_harmonics}. For $\vec{a}, \vec{b} \perp \hat{r}$, we have that 
\begin{equation}
    (\hat{r}\times \vec{a})\times(\hat{r}\times\vec{b}) = ((\vec{a}\times\vec{b}).\hat{r})\hat{r}.
\end{equation}
Hence, for $||\hat{r}||=1$ we get
\begin{equation}
    \begin{aligned}
(\nabla Y^{l_1}_{m_1}\times \nabla Y^{l_2}_{m_2}) \cdot \hat{r} &= (\hat{r}\times \nabla Y^{l_1}_{m_1})\times(\hat{r}\times \nabla Y^{l_2}_{m_2}) \cdot \hat{r}\\
&=-(\hat{L}Y^{l_1}_{m_1})\times(\hat{L}Y^{l_2}_{m_2}) \cdot \hat{r}\\
&=-\sqrt{l_1 l_2 (l_1+1)(l_2+1)}\left(\vec{Y}_{l_1, l_1, m}\times \vec{Y}_{l_2, l_2, m_2}\right) \cdot \hat{r},
\end{aligned}
\end{equation}
where we use Equation \eqref{eq: ang_mom} between the second and the third line. It therefore remains to evaluate the integral
\begin{equation}\label{eq: as_gaunt_simpl}
    \int_{\mathbb{S}^{2}} \left(\vec{Y}_{l_1, l_2,m}\times \vec{Y}_{l_2,l_2,m_2}\right) \cdot \left(\hat{r} Y^{l_3}_{m_3}\right)^*\mathrm{d}\mu_{\mathbb{S}^{2}}(\hat{r}).
\end{equation}

To proceed, we can use the identity introduced in Equation 101, page 222 of \cite{Varshalovich1988}
\begin{equation}\label{eq: paper_43}
\left(\vec{Y}_{l_1, l_1, m_1}\times \vec{Y}_{l_2, l_2, m_2}\right) = i\sqrt{\frac{3}{2\pi}}{(2l_1+1)(2l_2+1)}\left[\sum_{j,l} \begin{Bmatrix}
l_1 & l_1 & 1\\
l_2 & l_2 & 1\\
j & l & 1
\end{Bmatrix}C^{l 0}_{l_1 0, l_20}C^{jm}_{l_1m_1, l_2m_2} \vec{Y}_{j,l,m}\right]
\end{equation}

where the curly bracket terms $\{ \}$ are the Wigner-9j symbols. To find the integral, we will use orthogonality of the vector harmonics, and so we first find it useful to re-write $\hat{r} Y^{l_3}_{m_3}(\hat{r})$ in terms of the irreducible vector harmonics. From Equation 70, page 219 of \cite{Varshalovich1988}, we have that 
\begin{equation}\label{eq: identity}
\hat{r} \,Y^{l_3}_{m_3}(\hat{r})
=
\sqrt{\frac{l_3}{2l_3+1}}\,\vec{Y}_{ l_3,l_3-1,m_3}(\hat{r})
-
\sqrt{\frac{l_3+1}{2l_3+1}}\,\vec{Y}_{ l_3,l_3+1,m_3}(\hat{r})\,.
\end{equation}

Using the orthonormality of the tensor harmonics in Equation \eqref{eq: orthonormality} we now subsitute Equation \eqref{eq: identity} into Equation \eqref{eq: paper_43} and integrate over the sphere to find
\begin{equation}{\label{eq: res_app}}
\begin{split}
    & \int_{\mathbb{S}^2} \left(\vec{Y}_{l_1,l_1,m_1}\times \vec{Y}_{l_2,l_2,m_2}\right) \cdot \left(\hat{r} Y_{m_3 }^{\ell_3}\right)^*d\mu_{\mathbb{S}^2}(\hat{r}) \\
    & = i \sqrt{\frac{3}{2 \pi}} \frac{( 2 l_1 +1)( 2 l_2 + 1)}{\sqrt{2 l_3 + 1}}
\left[\sqrt{l_3}C^{(l_3-1)0}_{l_10,l_20}\begin{Bmatrix}
l_1 & l_1 & 1 \\
l_2 & l_2 & 1 \\
l_3 & l_3 -1 & 1
\end{Bmatrix} - \sqrt{l_3 + 1}C^{(l_3+1)0}_{l_10,l_20}\begin{Bmatrix}
l_1 & l_1 & 1 \\
l_2 & l_2 & 1 \\
l_3 & l_3 +1 & 1
\end{Bmatrix}\right]
C^{l_3m_3}_{l_1m_1,l_2m_2},
    \end{split}
\end{equation}
Recalling Equation \eqref{eq: as_gaunt}, Equation \eqref{eq: res_app} implicitly defines $\Lambda_{l_1 l_2 l_3}$ in Equation \eqref{eq: as_gaunt}
\begin{equation}\label{eq: lambda}
\begin{split}
\Lambda_{l_1 l_2 l_3}=&    -i \sqrt{\frac{3}{2 \pi}\frac{l_1 l_2 (l_1+1)(l_2+1)}{(2 l_3 +1)}} ( 2 l_1 +1)( 2 l_2 + 1) \\
& \times
\left[\sqrt{l_3}C^{(l_3-1)0}_{l_10,l_20}\begin{Bmatrix}
l_1 & l_1 & 1 \\
l_2 & l_2 & 1 \\
l_3 & l_3-1 & 1
\end{Bmatrix} - \sqrt{l_3 + 1}C^{(l_3+1)0}_{l_10,l_20}\begin{Bmatrix}
l_1 & l_1 & 1 \\
l_2 & l_2 & 1 \\
l_3 & l_3+1 & 1
\end{Bmatrix} \right].
\end{split}
\end{equation}
It remains to study the selection rules of $\Lambda_{l_1 l_2 l_3}$ and to show that its non-vanishing for odd parity $l_1 + l_2 - l_3 = 2k +1$. 

To show this, it is sufficient to study the components with the two Wigner-9j terms. Our method will be to first show that $C^{(l_3-1)0}_{l_10,l_20}$ and $C^{(l_3+1)0}_{l_10,l_20}$ are always strictly non-vanishing for odd parity and have opposite signs. Our proof then follows by showing the Wigner-9j symbols appearing in Equation \eqref{eq: as_gaunt} always have the same sign and never vanishes simultaneously.  

The explicit form of the Clebsch-Gordan coefficients for $m_1 = m_2 = m_3 =0$ can be read off from Equation 32 page 251 of \cite{Varshalovich1988}
\begin{equation}
C^{l_3 0}_{l_1 0, l_2 0}
=
\begin{cases}
0, & \text{if } l_1 + l_2 + l_3 = 2g + 1, \\[6pt]
\dfrac{(-1)^{g-l_3}\sqrt{2l_3+1}\, g!}
{(g-l_1)!(g-l_2)!(g-l_3)!}
\left[
\dfrac{(2g-2l_1)!(2g-2l_2)!(2g-2l_3)!}{(2g+1)!}
\right]^{\frac12},
& \text{if } l_1 + l_2 + l_3 = 2g .
\end{cases}
\end{equation}
We immediately see that $\Lambda_{l_1 l_2 l_3}$ vanishes unless
\begin{equation}
    l_1 + l_2 + l_3 \pm 1 = 2g,
\end{equation}
which is equivalent to the parity condition, given $l_1, l_2, l_3$ are integers. The sign of $C^{l_3 0}_{l_1 0, l_2 0}$ is thus determined by $(-1)^{g-l_3}$. We immediately see that
\begin{equation}
\begin{split}
    \text{Sign}\left( C^{(l_3-1) 0}_{l_1 0, l_2 0}\right) & = (-1)^{\frac{l_1 + l_2 -l_3 +1 }{2} }\\
  \text{Sign}\left( C^{(l_3+1) 0}_{l_1 0, l_2 0}\right) & = (-1)^{\frac{l_1 + l_2 -l_3 -1 }{2} }\\
 \text{Sign}\left(\frac{C^{(l_3-1) 0}_{l_1 0, l_2 0}}{ C^{(l_3+1) 0}_{l_1 0, l_2 0}}\right) & = (-1)^{\frac{1}{2} + \frac{1}{2}} =-1.
    \end{split}
\end{equation}
Since the $m=0$ Clebsch-Gordan coefficients have opposite signs, it remains to show that the Wigner-9j coefficients have the same sign and cannot be simultaneously vanishing. This fact follows from  Equations for the form of Wigner $9j$ symbols of the form in Table 10.9 page 383 of \cite{Varshalovich1988}. By taking the transpose of the formula, and then swapping the first two columns, we find: 

\begin{equation}
\label{eq: wigner-9j-c+1}
\left\{
\begin{matrix}
a & a & 1 \\
b & b & 1 \\
c & c+1 & 1
\end{matrix}
\right\}
=
-2(c+1)
\left[
\frac{(a+b+c+2)(a+b-c)(a-b+c+1)(-a+b+c+1)(2a-1)!(2b-1)!(2c)!}
{3(2a+2)!(2b+2)!(2c+3)!}
\right]^{1/2}.
\end{equation}

\begin{equation}
\label{eq: wigner-9j-c-1}
\left\{
\begin{matrix}
a & a & 1 \\
b & b & 1 \\
c & c-1 & 1
\end{matrix}
\right\}
=
-2c
\left[
\frac{(a+b+c+1)(a+b-c+1)(a-b+c)(-a+b+c)(2a-1)!(2b-1)!(2c-2)!}
{(2a+2)!(2b+2)!(2c+1)!}
\right]^{1/2}.
\end{equation}
It is clear that both terms have the same sign. Then, if any of the terms $(a+b-c), (a-b+c), (-a+b+c)$ is zero,  $a+b+c$ must be even. By contraposition, none of these terms can vanish whenever $a+b+c$ is odd. As a consequence, assuming $a+b+c$ is odd, the Wigner-9j symbol of Eq.~\eqref{eq: wigner-9j-c-1} is equal to zero if and only if $a+b-c+1=0$, which yields $a+b=c-1$. In this case, we get $a-b+c+1 =a-b+a+b+2=2(a+1)>0$ and $-a+b+c+1 = -a+b+a+b+2=2(b+1)>0$, such that the Wigner-9j symbol of Eq.~\eqref{eq: wigner-9j-c+1} is non-zero. Reciprocally, assuming $a+b+c$ is odd, the Wigner-9j symbol of Eq.~\eqref{eq: wigner-9j-c+1} is equal to zero if and only if $a-b+c+1=0$ or $-a+b+c+1=0$. If $a-b+c+1=0$ we have that $b=a+c+1$, and it comes $a+b-c+1=a-c+1+a+c+1=2(a+1)>0$. Symmetrically, if $-a+b+c+1=0$ we get $ a = b+c+1$ such that $a+b-c+1 = 2(b+1)>0$. As a result, the Wigner-9j symbol of Eq.~\eqref{eq: wigner-9j-c-1} is non-zero whenever the one of Eq.~\eqref{eq: wigner-9j-c+1} is. This concludes our proof that Equation \eqref{eq: as_gaunt} holds, where $\Lambda_{l_1 l_2 l_3}$ (as defined by Equation \eqref{eq: lambda}) is non-vanishing for odd parity $l_1 + l_2 + l_3 = 2k+1$, and zero otherwise. 

Equation \eqref{eq: as_gaunt} is a tensor equation that holds in the complex spherical harmonic basis. In Section \ref{sec: transformation} we explicitly transform it to the real spherical harmonics basis. With this choice of basis it is possible to show that the rotated Clebsch-Gordan coefficients for odd parity are purely imaginary, which explains the factor of $i$ appearing in Equation \eqref{eq: lambda}. 

\subsection{Transforming from complex to real harmonics}\label{sec: transformation}
In this section we show explicitly how Equation \eqref{eq: as_gaunt} transforms as we change from the complex spherical basis to the real Cartesian basis, and from complex spherical harmonics to the real spherical harmonics. 

Let $\{e_x,e_y,e_z\}$ denote the Cartesian basis and
$\{e_{+1},e_0,e_{-1}\}$ the complex spherical basis
\begin{equation}\label{eq:sph_vec_basis}
e_{\pm1}=\mp\frac{1}{\sqrt2}(e_x\pm i e_y),\qquad e_0=e_z.
\end{equation}
The change of basis is described by the unitary matrix
\begin{equation}
W^{(1)}=
\begin{pmatrix}
-\frac{1}{\sqrt2} & 0 & \frac{1}{\sqrt2}\\
\frac{i}{\sqrt2} & 0 & \frac{i}{\sqrt2}\\
0 & 1 & 0
\end{pmatrix}.
\end{equation}

There are two separate rotations we can do on Equation \eqref{eq: as_gaunt}. Firstly, we can rotate the tangent space vectors with $W^{(1)}$. These vectors arise from the $m_s$ spin indices of Equation \eqref{eq: tensor_harmonic}. Separately, we can also choose a different basis for the spherical harmonics themselves via a rotation on the $m$ indices. The fact that we have both these choices arises because the vector harmonics form a basis for irreducible representations of vector fields on the sphere. 

Since Equation \eqref{eq: as_gaunt} is a scalar equation in the tangent space, we know that it transforms as a scalar under changes of basis of the tangent space, and so the change of basis from spherical to Cartesian coordinates on the tangent space is trivial. However, Equation \eqref{eq: as_gaunt} has free $m$ labels, and so the equation does transform when we change the basis from complex to real harmonics. 

Specifically, let $W^{(l)}$ be a unitary change-of base matrix from complex spherical basis to the real spherical basis $Y_{lm}$
\begin{equation}
Y_{lm}
=
\sum_{m} W^{(l)}_{m m'}\,Y^{l}_{m'} .
\end{equation}
$W^{(l)}$ can be defined by the following relations 
\begin{equation}
\begin{split}
& Y_{l m} = 
\frac{(-1)^m}{\sqrt{2}}
\left(
Y^{l}_{m}
+
(-1)^m Y^{l}_{-m}
\right), \ \
Y_{l,-m}
=
\frac{1}{\sqrt{2}\,i}
\left(
Y^{l}_{m}
-
(-1)^m Y^{l}_{-m}
\right),  \ \ m >0\\
& Y_{l0}=Y^{l}_{0}.
\end{split}
\end{equation}
Under such a rotation, both the left and right hand side of Equation \eqref{eq: as_gaunt} transform. In particular we find 
\begin{equation}\label{eq: as_gaunt_real}
    \int_{\mathbb{S}^{2}}(\nabla Y_{l_1 m_1} \times \nabla Y_{l_2 m_2})\cdot (\hat{r} Y_{l_3m_3}) \mathrm{d}\mu_{\mathbb{S}^{2}}(\hat{r}) = \Lambda_{l_1 l_2 l_3}\left(C^{\mathbb R}\right)^{l_3 m_3}_{l_1 m_1\,l_2 m_2},
\end{equation}

where we have defined the real basis Clebsch-Gordan tensor
\begin{equation}\label{eq: real_cg}
\left(C^{\mathbb R}\right)^{l_3 m_3}_{l_1 m_1\,l_2 m_2}
=
\sum_{m'_1,m'_2,m'_3}
W^{(l_1)}_{m_1 m'_1}\,
W^{(l_2)}_{m_2 m'_2}\,C^{l_3 m'_3}_{l_1 m'_1\,l_2 m'_2}\,\left(W^{(l_3)\dagger}\right)_{m'_3 m_3}\,.
\end{equation}
Note that the left hand side of Equation \eqref{eq: as_gaunt_real} is now real, and so the right hand side must also be. However, there is a puzzling factor in $i$ in Equation \eqref{eq: lambda} for $\Lambda_{l_1, l_2, l_3}$. This is accounted for when we note that the real-basis Clebsch-Gordan tensor is purely imaginary for odd parity. More generally we have 
\begin{equation}\label{eq: real_parity}
\left(\left(C^{\mathbb R}\right)^{l_3 m_3}_{l_1 m_1\,l_2 m_2}\right)^{*}
=
(-1)^{l_1+l_2-l_3}
(C^{\mathbb R})^{l_3 m_3}_{l_1 m_1\,l_2 m_2}.
\end{equation}
so
\begin{equation}
(C^{\mathbb R})\in
\begin{cases}
\mathbb{R} & l_1+l_2-l_3 \text{ even},\\
i\,\mathbb{R} & l_1+l_2-l_3 \text{ odd}.
\end{cases}
\end{equation}

We now give a short proof of Equation \eqref{eq: real_parity}.

Consider, the complex conjugate of Equation \eqref{eq: real_cg}
\begin{align}\label{eq: conj_real}
\big((C^{\mathbb R})^{l_3 m_3}_{l_1 m_1\,l_2 m_2}\big)^*
&=
\sum_{m'_1,m'_2,m'_3}
W^{(l_1)*}_{m_1 m'_1}\,
W^{(l_2)*}_{m_2 m'_2}\,C^{l_3 m'_3}_{l_1 m'_1\,l_2 m'_2}\,W^{(l_3)T}_{m'_3 m_3}\,.
\end{align}

For the complex to real spherical harmonic basis change, the matrix $W^{(l)}$ satisfies
\begin{equation}
W^{(l)*}_{m m'}
=
(-1)^{m'} W^{(l)}_{m,-m'},
\end{equation}
which follows from the identity $Y^{l*}_{m}=(-1)^m Y^{l}_{-m}$. We now use the known identity (see Equation 24, page 247 \cite{Varshalovich1988}) that
\begin{equation}\label{eq: parity_identity}
C^{l_3 m_3}_{l_1 m_1\,l_2 m_2} = 
(-1)^{l_1+l_2-l_3}
C^{l_3,-m_3}_{l_1,-m_1\,l_2,-m_2}
\end{equation}
Substituting Equation \eqref{eq: parity_identity} into Equation \eqref{eq: conj_real} and relabeling
the summation indices we find
\begin{equation}
\big((C^{\mathbb R})^{l_3 m_3}_{l_1 m_1\,l_2 m_2}\big)^* =(-1)^{l_1+l_2-l_3}
\sum_{m'_1,m'_2,m'_3}
W^{(l_1)}_{m_1m'_1}\,
W^{(l_2)}_{m_2m'_2}\,
C^{l_3 m'_3}_{l_1 m'_1\,l_2 m'_2}\,
\left(W^{(l_3)\dagger}\right)_{m'_3 m_3}\,,
\end{equation}
where we have also used that $(-1)^{m'_1 + m'_2 + m'_3} = (-1)^{2m'_3} =1$ via selection rules. Recognizing the original definition of $(C^{\mathbb R})$ in Equation \eqref{eq: real_cg} we find
\begin{equation}
\big((C^{\mathbb R})^{l_3 m_3}_{l_1 m_1\,l_2 m_2}\big)^*
=
(-1)^{l_1+l_2-l_3}
(C^{\mathbb R})^{l_3 m_3}_{l_1 m_1\,l_2 m_2}.
\end{equation}
which completes the proof. 

Note that in computational libraries such as e3nn \cite{geiger2022e3nneuclideanneuralnetworks}, the rotation of the Clebsch-Gordan coefficients is defined with an extra factor of $i$ in according to $\tilde{W}^{(l)} = (-i)^l W^{(l)}$. With this in mind, the real e3nn Clebsch-Gordan coefficients are defined by
\begin{equation}
    (\tilde{C}^{\mathbb{R}})_{l_1 m_1 l_2 m_2}^{l_3 m_3} = i^{l_1 + l_2  -l_3}(C^{\mathbb{R}})_{l_1 m_1 l_2 m_2}^{l_3 m_3} = i^{l_1 + l_2  +l_3}(-1)^{l_3}(C^{\mathbb{R}})_{l_1 m_1 l_2 m_2}^{l_3 m_3} 
\end{equation}
which is always real. With these conventions, our result Eq.~\eqref{eq: as_gaunt_real} reads 
\begin{equation}
    \int_{\mathbb{S}^{2}}(\nabla Y_{l_1 m_1} \times \nabla Y_{l_2 m_2})\cdot (\hat{r} Y_{l_3m_3}) \mathrm{d}\mu_{\mathbb{S}^{2}}(\hat{r}) = \text{Im}(\Lambda_{l_1 l_2 l_3}) \ (-i)^{l_1 + l_2  +l_3 -1 }(-1)^{l_3}(\tilde{C}^{\mathbb R})^{l_3 m_3}_{l_1 m_1\,l_2 m_2},
\end{equation}
which is always real for odd parity. Finally, identifying the above formula with Eq.~\eqref{eq: main_result_odd} of the main text, we get
\bea
\tilde{V}^{l_3}_{l_1,l_2} =  \ (-1)^{\frac{l_1 + l_2 + l_3 -1}{2}+l_3}\text{Im}(\Lambda_{l_1 l_2 l_3})\,.
\eea

\subsection{Low-Rank Decomposition of Coupling Coefficients}
\label{app:lowrank}

In this Section we discuss the normalization of the antisymmetric Gaunt
coupling coefficients introduced in the main text. When these quantities
are used within $O(3)$-equivariant neural networks, it is desirable that
the associated tensor products have magnitudes comparable to standard
Clebsch-Gordan tensor products so that network layers are well
initialized and numerically stable.

As discussed in Section \ref{sec: expressivity}, the efficient implementation of an integral tensor product formula relies on a factorized representation of the spherical integral that separates the dependence on $l_1$, $l_2$, and $l_3$. If the tensor product is explicitly normalized by multiplying each channel by a general tensor, this factorization would be destroyed unless the normalization itself can be expressed in a separable form.

In particular, practical use of the factorized integral formula discussed in the main text benefits from an explicit low-rank decomposition of the coupling tensor if one wants to explicitly normalize the tensor product. We therefore wish to approximate normalization factors of the form 
\begin{equation}\label{eq: decomp-app}
  (T_{l_1 l_2}^{l_3})^{-1}
  \;\approx\;
  \sum_{r=1}^{R} a_{l_1,r}\, b_{l_2,r}\, c_{l_3,r}
\end{equation}
for a small $R$, so that we can initialize layers to be approximately the same magnitude as the Clebsh-Gordan tensor product and benefit from the factorization trick. In Equation \eqref{eq: decomp-app}, $T_{l_1, l_2}^{l_3}$ can be either the anti-symmetric or symmetric Gaunt coefficients, $T_{l_1, l_2}^{l_3} \in \{ \tilde{V}_{l_1, l_2}^{l_3}, \tilde{G}_{l_1, l_2}^{l_3} \}$. 

Empirically we find that $(T_{l_1 l_2}^{l_3})^{-1}$ is intrinsically
low rank. Figure~\ref{fig:scaling} compares the quality of rank-1 and rank-2 approximations across $\lmax$ for $T= \tilde{V}$, the antisymmetric Gaunt coefficients. A rank-$2$ decomposition accurately reproduces the tensor over a wide range of angular momenta. In particular, a rank-$2$ approximation achieves approximately
$10\%$ per-entry accuracy for all tested values up to $\lmax = 19$.
By contrast, a rank-$1$ approximation fails qualitatively and cannot
reproduce the structure of the tensor. Rank~1 is qualitatively incorrect ($\logsig \sim 1$--$2$, and we also noticed sign errors over large subspaces), whereas rank~2 achieves $\logsig < 0.04$ at all tested $\lmax$, with 100\% of entries accurate to within a factor of two. This behavior likely arises from
the two distinct coupling pathways $l_3 \pm 1$ appearing in
Eq.~\eqref{eq: lambda}, which introduce two dominant modes in the
coupling tensor.

Figure~\ref{fig:comparison} compares fits of $(\tilde{V}^{l_3}_{l_1 l_2})^{-1}$ with fits of the even-parity Gaunt coefficients $(\Gtil^{l_3}_{l_1,l_2})^{-1}$.
The Gaunt tensor depends on a single Clebsch--Gordan coefficient coupling, and appears approximately rank~1 ($\logsig < 0.13$). This perhaps explains why the authors of \cite{luo2024enabling} did not encounter issues with normalization.

The decomposition in Equation by minimizing the mean relative squared error
\begin{equation}
  \mathcal{L} =
  \frac{1}{N}
  \sum_{i=1}^{N}
  \frac{(y_i - \hat{y}_i)^2}{y_i^2},
  \label{eq:loss}
\end{equation}
where the sum runs over the $N$ non-zero entries permitted by the
selection rules. This loss treats all tensor elements equally on a
relative scale, which is important because the
$\tilde{V}_{l_1 l_2}^ {l_3}$ span several orders of magnitude. To quantify the quality of the approximation we report the logarithmic
ratio spread
\begin{equation}
    \logsig = \operatorname{std}\!\left[\log_{10}|y_i/\hat{y}_i|\right]
\end{equation}
which measures the dispersion of multiplicative errors. In this metric,
$\logsig = 0$ corresponds to exact agreement,
$\logsig = 0.3$ corresponds to a factor-of-two scatter,
and $\logsig = 0.04$ corresponds to approximately $10\%$ relative error.

Optimization was performed using L-BFGS-B with numerically estimated
gradients, convergence tolerances
$f_{\mathrm{tol}} = 10^{-15}$ and $g_{\mathrm{tol}} = 10^{-10}$,
and a maximum of 300 iterations per run. In practice the optimization
converged reliably, and alternative optimization methods yielded
similar results.

\begin{figure}[htbp]
  \centering
  \includegraphics[width=\textwidth]{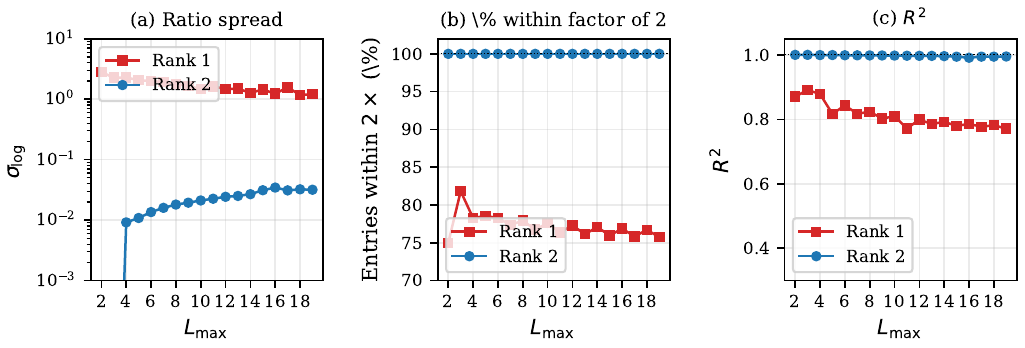}
  \caption{%
    Decomposition quality for $(\tilde{V}_{l_1 l_2}^{l_3})^{-1}$ vs $\lmax$.
    \textbf{(a)}~$\logsig$ (log scale): rank~2 remains below 0.04,
    while rank~1 exhibits order-of-magnitude errors.
    \textbf{(b)}~100\% of rank-2 entries lie within a factor of $2$.
    \textbf{(c)}~$R^2$ for the rank-2 fit remains above $0.9$.
  }
  \label{fig:scaling}
\end{figure}

\begin{figure}[htbp]
  \centering
  \includegraphics[width=\textwidth]{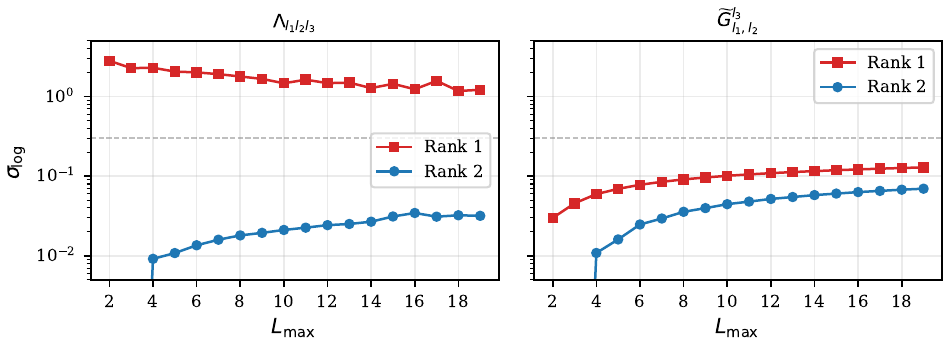}
  \caption{%
    $\logsig$ vs $\lmax$ for $(\tilde{V}^{l_3}_{l_1 l_2})^{-1}$ (left)
    and $(\Gtil^{l_3}_{l_1,l_2})^{-1}$ (right).
    The dashed line indicates $2\times$ scatter. We see that 
    $\tilde{V}^{-1}$ requires rank~2, whereas $\Gtil^{-1}$ a single rank approximation is sufficient for normalization in a neural network. 
  }
  \label{fig:comparison}
\end{figure}

\end{document}